\newtheorem{theorem}{Theorem}
\newtheorem{lemma}[theorem]{Lemma}
\newtheorem{definition}{Definition}
\begin{document}

\title{Multi-Objective Optimization for Privacy-Utility Balance in Differentially Private Federated Learning}

\author{Kanishka Ranaweera,~\IEEEmembership{Student Member, ~IEEE, } David Smith,~\IEEEmembership{Member, ~IEEE, } Pubudu N. Pathirana,~\IEEEmembership{Senior Member, ~IEEE, } Ming Ding,~\IEEEmembership{Senior Member, ~IEEE, } Thierry Rakotoarivelo,~\IEEEmembership{Senior Member, ~IEEE, } Aruna Seneviratne,~\IEEEmembership{Senior Member, ~IEEE }

\IEEEcompsocitemizethanks{\IEEEcompsocthanksitem Kanishka Ranaweera is with School of Engineering and Built Environment, Deakin University, Waurn
Ponds, VIC 3216, Australia, and also with the Data61, CSIRO, Eveleigh, NSW 2015, Australia. \protect\\
E-mail: kranaweera@deakin.edu.au
\IEEEcompsocthanksitem David Smith is with Data61, CSIRO, Eveleigh, NSW 2015, Australia. \protect\\
E-mail: david.smith@data61.csiro.au
\IEEEcompsocthanksitem Pubudu N. Pathirana is with School of Engineering and Built Environment, Deakin University, Waurn Ponds, VIC 3216, Australia. \protect\\
E-mail: pubudu.pathirana@deakin.edu.au
\IEEEcompsocthanksitem Ming Ding is with Data61, CSIRO, Eveleigh, NSW 2015, Australia. \protect\\
E-mail: ming.ding@data61.csiro.au
\IEEEcompsocthanksitem Thierry Rakotoarivelo is with Data61, CSIRO, Eveleigh, NSW 2015, Australia. \protect\\
E-mail: thierry.rakotoarivelo@data61.csiro.au
\IEEEcompsocthanksitem Aruna Seneviratne is with School of Electrical Engineering and Telecommunications, University of New South Wales (UNSW), NSW, Australia. \protect\\
E-mail: a.seneviratne@unsw.edu.au

}
}

\maketitle

\begin{abstract}
Federated learning (FL) enables collaborative model training across distributed clients without sharing raw data, making it a promising approach for privacy-preserving machine learning. However, ensuring differential privacy (DP) in FL presents challenges due to the trade-off between model utility and privacy protection. Clipping gradients before aggregation is a common strategy to limit privacy loss, but selecting an optimal clipping norm is non-trivial, as excessively high values compromise privacy, while overly restrictive clipping degrades model performance. In this work, we propose an adaptive clipping mechanism that dynamically adjusts the clipping norm using a multi-objective optimization framework. By integrating privacy and utility considerations into the optimization objective, our approach balances privacy preservation with model accuracy. We theoretically analyze the convergence properties of our method and demonstrate its effectiveness through extensive experiments on MNIST, Fashion-MNIST, and CIFAR-10 datasets. Our results show that adaptive clipping consistently outperforms fixed-clipping baselines, achieving improved accuracy under the same privacy constraints. This work highlights the potential of dynamic clipping strategies to enhance privacy-utility trade-offs in differentially private federated learning.

\end{abstract}

\begin{IEEEkeywords}
Federated Learning, Differential Privacy, Adaptive Clipping, Privacy-Utility Trade-off
\end{IEEEkeywords}

\section{Introduction}

Federated Learning (FL) has emerged as a transformative paradigm for collaborative training of machine learning models without centralized data aggregation \cite{mcmahan2017communication, li2020federated}. This distributed approach is particularly appealing for privacy-sensitive applications such as healthcare, where hospitals collaboratively train diagnostic models without exposing patient records\cite{nguyen2022federated}; finance, where multiple banks develop fraud detection algorithms while keeping customer data private\cite{long2020federated}; and mobile applications, where devices personalize predictive keyboards without sharing user text inputs with a central server\cite{hard2018federated}. FL enhances privacy by ensuring that raw data never leaves local devices or institutions, reducing the risk of direct data exposure. However, despite this advantage, FL alone does not guarantee strong privacy protection, as model updates exchanged between clients and the central server can still leak sensitive information through membership inference attacks or model inversion techniques \cite{hu2022membership,fredrikson2015model}. Thus, additional privacy-preserving mechanisms, such as Differential Privacy (DP)\cite{dwork2008differential,dwork2014algorithmic}, are essential to mitigate these risks.

DP is a formal framework that limits the information that can be inferred about individual data points from a dataset. When applied to FL, DP ensures that an adversary, even with access to model updates, cannot confidently determine whether a specific client's data was used during training. This is typically achieved by injecting controlled noise into model updates, thereby obscuring individual contributions. However, integrating DP into FL introduces unique challenges, particularly in balancing privacy guarantees with model utility \cite{el2022differential}. A key factor in this balance is the clipping norm used in DP-SGD, which directly impacts the amount of noise added to gradients. A higher clipping norm retains more gradient information but requires greater noise to maintain privacy guarantees, which can degrade model performance. Conversely, a lower clipping norm limits the noise required but risks discarding essential signal information, leading to undertrained models \cite{zhang2022understanding}. Addressing this trade-off is crucial for ensuring that privacy-preserving FL frameworks remain both effective and practical across diverse real-world applications.

In this paper, we propose a novel approach to address this trade-off by dynamically optimizing the clipping norm during training. Unlike static or manually tuned clipping norms, our method leverages a multi-objective optimization (MOO) framework that adjusts the clipping norm adaptively at each epoch. This optimization is driven by a combined objective that simultaneously minimizes privacy loss and maximizes model utility. By aligning the clipping norm with the training dynamics, our method ensures that gradients are appropriately clipped to achieve high utility while respecting rigorous privacy guarantees.

Our key contributions are as follows:
\begin{itemize}
    \item We introduce a novel MOO framework to guide clipping norm adjustment based on observed training dynamics, ensuring scalable and effective integration into existing FL pipelines.
    \item We present a theoretical convexity analysis to establish the mathematical properties of our optimization framework, ensuring a well-defined solution space for adaptive clipping.
    \item We provide a rigorous convergence analysis to demonstrate the effectiveness and robustness of our approach, ensuring that dynamic clipping does not hinder model optimization.
    \item We validate our method on commonly used benchmark datasets, including MNIST, FMNIST, and CIFAR-10, to demonstrate its effectiveness and generalizability across various data distributions and models.
\end{itemize}

The rest of the paper is organized as follows: In Section \ref{sec:background}, we provide an overview of related work on differentially private federated learning (DP-FL) and adaptive mechanisms. Section \ref{sec:adv_model} describes the adversarial model. Section \ref{sec:method} details the proposed method, including the MOO framework and the theoretical convergence analysis of our approach. In Section \ref{sec:results}, we describe the experimental setup, datasets, followed by a discussion of results and insights. Finally, Section \ref{sec:conclusion} concludes the paper with a summary of findings and directions for future work.

\section{Background}\label{sec:background}
This section provides an overview of the key concepts and methodologies relevant to DP-FL.

\subsection{Federated Learning}

FL \cite{mcmahan2017communication} is a distributed machine learning paradigm that enables multiple clients, such as edge devices or organizations, to collaboratively train a global model without sharing their private data. This approach ensures data privacy and security by keeping the data localized while only exchanging model updates with a central server.

\textbf{Problem Definition: }
Let $\mathcal{D}_k = \{(x_i^k, y_i^k)\}_{i=1}^{n_k}$ represent the local dataset held by client $k$, where $x_i^k \in \mathbb{R}^d$ denotes the input features and $y_i^k \in \mathbb{R}$ denotes the corresponding label. The objective of FL is to minimize a global loss function $L(\theta)$, defined as the weighted sum of local loss functions:
\begin{equation}
\label{eq:1}
    L(\theta) = \sum_{k=1}^K \frac{n_k}{n_{\mathcal{S}_t}} L_k(\theta),
\end{equation}
where $L_k(\theta)$ is the local loss function for client $k$ given by
\begin{equation}
    L_k(\theta) = \frac{1}{n_k} \sum_{i=1}^{n_k} \ell(f(x_i^k; \theta), y_i^k).
\end{equation}
Here, $\theta \in \mathbb{R}^d$ represents the global model parameters, $\ell(\cdot, \cdot)$ is a loss function (e.g., mean squared error or cross-entropy), $K$ is the number of clients, $n_k$ is the number of data points held by client $k$, and $n_{\mathcal{S}_t} = \sum_{k \in \mathcal{S}_t} n_k$ is the total number of data points across all clients.

 FL is designed to perform efficient and communication-aware distributed optimization. The algorithm proceeds as follows:

\begin{enumerate}
    \item \textbf{Initialization:} The central server initializes the global model parameters $\theta_0$.
    \item \textbf{Client Selection:} In each round $t$, a subset $\mathcal{S}_t \subseteq \{1, \ldots, K\}$ of clients is selected to participate in training.
    \item \textbf{Local Training:} Each selected client $k \in \mathcal{S}_t$ updates the global model using its local dataset $\mathcal{D}_k$. The local update is computed by solving the following optimization problem using $E$ epochs of stochastic gradient descent (SGD):
    \begin{equation}
        \theta_{t+1}^k = \theta_t^k - \eta_t \nabla L_k( \theta_t^k),
    \end{equation}
    where $\eta_t$ is the learning rate.
    \item \textbf{Aggregation:} The central server aggregates the updated models received from the clients to compute the new global model:
    \begin{equation}
        \theta_{t+1} = \sum_{k \in \mathcal{S}_t} \frac{n_k}{n_{\mathcal{S}_t}} \theta_{t+1}^k,
    \end{equation}
\end{enumerate}

FL provides several advantages, including enhanced data privacy, reduced communication overhead, and the ability to train models on diverse and distributed datasets\cite{kairouz2021advances}. However, it also introduces significant challenges, such as:
\begin{itemize}
    \item \textbf{Heterogeneous Data:} Clients may have non-IID (non-independent and identically distributed) data, leading to training instability.
    \item \textbf{System Heterogeneity:} Clients may have varying computational and communication capabilities.
    \item \textbf{Privacy Concerns:} Although raw data is not shared, model updates can still leak sensitive information, necessitating privacy-preserving mechanisms such as DP \cite{dwork2006}.
\end{itemize}

\subsection{Differential Privacy}

DP\cite{dwork2006,dwork2014algorithmic,dwork2008differential} is a rigorous mathematical framework for quantifying privacy guarantees. In the context of FL, DP ensures that individual clients' data cannot be inferred from the shared model updates, even by an adversary with access to auxiliary information\cite{mcmahan2017learning}. This section explores the integration of DP into FL, the mechanisms employed, and the role of privacy accountants \cite{abadi2016deep,wang2019subsampled} in measuring the cumulative privacy loss.

\begin{definition}
A randomized mechanism $\mathcal{M}: \mathcal{D} \to \mathcal{R}$ is said to provide $(\varepsilon, \delta)$-DP if, for any two datasets $D, D'$ differing in at most one data point and for any subset of outputs $S \subseteq \mathcal{R}$, the following holds:
\begin{equation}
    \Pr[\mathcal{M}(D) \in S] \leq e^{\varepsilon} \Pr[\mathcal{M}(D') \in S] + \delta.
\end{equation}
Here, $\varepsilon > 0$ is the privacy parameter, where smaller values of $\varepsilon$ correspond to stronger privacy guarantees, and $\delta \geq 0$ represents the probability of a privacy breach, allowing for a small relaxation of the guarantee. The case where $\delta = 0$ corresponds to pure $\varepsilon$-DP.
\end{definition}

To achieve DP in FL, two primary mechanisms are commonly employed:
\begin{itemize}
    \item \textbf{Noise Addition:} Gaussian noise is added to the model updates or aggregated results to obscure individual contributions. Noise can be introduced in two key ways:
    \begin{enumerate}
        \item \textbf{User-Level DP:} In this approach, noise is added at the aggregation stage on the server side. After collecting model updates from clients, the server perturbs the aggregated result to achieve DP\cite{abadi2016deep}. Mathematically, the noisy aggregation can be expressed as:
        \begin{equation}
            \theta_{t+1} = \sum_{k \in \mathcal{S}_t} \frac{n_k}{n_{\mathcal{S}_t}} \theta_{t+1}^k + \mathcal{N}(0, \sigma^2),
        \end{equation}
        where $\mathcal{N}(0, \sigma^2)$ is Gaussian noise with variance $\sigma^2$, and the variance $\sigma^2$ is proportional to the square of the clipping norm $C$. This dependence ensures that the noise scales appropriately with the sensitivity of the clipped updates.

        \item \textbf{Sample-Level DP:} In this approach, noise is added during local training at the client level\cite{mcmahan2017learning}. Each client perturbs their gradient updates before sending them to the server. The local update with noise can be expressed as:
        \begin{equation}
            \theta_{t+1}^k = \theta_{t+1}^k + \mathcal{N}(0, \sigma^2).
        \end{equation}
        Here, the noise variance $\sigma^2$ is also proportional to the square of the clipping norm $C$ at the client level.
        This ensures that the privacy guarantee holds locally for each client update while respecting the overall sensitivity bound imposed by the clipping threshold.
    \end{enumerate}

    \item \textbf{Clipping:} To limit the sensitivity of the gradients, individual client updates are clipped before aggregation:
    \begin{equation}
        \theta_{t+1}^k \gets \theta_{t+1}^k \cdot \min\left(1, \frac{C}{\|\theta_{t+1}^k\|}\right),
    \end{equation}
    where $C$ is a predefined clipping threshold. Clipping plays a dual role in FL. Firstly, it ensures that updates with excessively large magnitudes do not dominate the aggregation, thus stabilizing training and reducing the impact of outliers. Secondly, it bounds the sensitivity of updates, which is critical for accurately calibrating the noise required to achieve DP. 

    The choice of the clipping threshold $C$ introduces a trade-off between privacy and utility. If $C$ is too small, important updates may be excessively suppressed, leading to a loss in model accuracy. Conversely, a large $C$ increases the sensitivity, requiring more noise to maintain privacy guarantees, which can degrade the utility of the global model. Adaptive clipping mechanisms, which dynamically adjust $C$ based on statistical properties of the updates, are an emerging solution to balance this trade-off effectively.
\end{itemize}

In FL, multiple rounds of training amplify the total privacy loss. Privacy accountants, such as the moments accountant \cite{abadi2016deep} and the R\'enyi DP accountant \cite{wang2019subsampled}, are used to track and manage this cumulative loss, enabling a principled way to guarantee privacy over multiple iterations.

Selecting the appropriate clipping norm $C$ is one of the most significant challenges in FL with DP. The clipping threshold determines the trade-off between utility and privacy: a small $C$ effectively limits the sensitivity of the gradients but may clip valuable updates, reducing model accuracy. On the other hand, a large $C$ preserves more information but increases the noise required to satisfy privacy guarantees, thereby degrading utility.

\subsection{Related Work}

Prior research in DP-FL has explored various approaches to improve privacy-utility trade-offs. Early works such as DP-SGD \cite{abadi2016deep} introduced noise addition at the client level to ensure differential privacy. Another approach, DP-FedAvg \cite{mcmahan2017learning}, modifies the federated averaging procedure to incorporate privacy guarantees while maintaining communication efficiency.

Gradient clipping has been widely studied as a method to control sensitivity in DP-FL. Fixed clipping norms \cite{geyer2017differentially} provide stability but struggle to balance privacy and utility effectively. Recent works have explored adaptive clipping strategies \cite{andrew2021differentially}, where the clipping norm is dynamically adjusted based on gradient statistics. These techniques reduce unnecessary information loss while still ensuring privacy constraints.

Dynamic DP-SGD \cite{du2021dynamic} has emerged as a promising alternative, adjusting clipping thresholds and noise scales dynamically across training steps. This approach stabilizes updates and mitigates the performance degradation observed in traditional DP-SGD. The work in \cite{papernot2021tempered} further highlights how tempered sigmoid activations can be leveraged to implicitly control gradient norms, reducing the need for aggressive clipping and thereby improving accuracy under DP constraints.

Additionally, DP-SGD-WAV \cite{ranaweera2023improving} refines the application of DP in FL by incorporating a wavelet-based adaptive variance reduction mechanism, leading to improved model convergence and utility across various noise settings. These approaches collectively highlight the importance of adaptivity in addressing the trade-off between privacy and utility in FL .

\section{Adversarial Model}\label{sec:adv_model}

Our adversarial model assumes an honest-but-curious central server that faithfully executes the FL protocol but may attempt to infer sensitive information from the intermediate parameters shared by the clients. We also consider external adversaries that could intercept the communication between clients and the server, attempting to extract private client information.

The transmitted parameters in the FL process may expose client-specific information, such as rare features or unique patterns. An adversary could exploit these exposures to launch attacks. We focus on two primary privacy threats:

\begin{enumerate}
    \item \textbf{Membership Inference Attacks:} These attacks attempt to determine whether a particular client's data was used during model training. Such attacks can reveal sensitive information about a client, such as their medical records or financial activities. Adversaries achieve this by analyzing the intermediate parameters shared during aggregation \cite{hu2022membership}.
    
    \item \textbf{Model Inversion Attacks:} These attacks aim to reconstruct sensitive data from the aggregated parameters or the final global model. By leveraging patterns in the shared parameters, an adversary can recover private information, such as unique data points or identifiable features \cite{fredrikson2015model}.
\end{enumerate}

To counter these threats, we employ DP techniques at key stages of the FL process. By introducing carefully calibrated noise, we ensure that neither the intermediate parameters shared with the server nor the final global model reveal private client information. The inclusion of DP, however, introduces challenges such as reduced model accuracy due to noise injection.

Our methodology explores the sample-level DP approach where Noise is added during the stochastic gradient descent (SGD) training process at each client\cite{abadi2016deep}. This ensures that individual data samples remain indistinguishable within a client’s local dataset.

We assume that the adversary has full knowledge of the FL system, including its design, algorithms, and hyperparameters. This reflects a worst-case scenario where the adversary exploits all available information to compromise client privacy. Additionally, we acknowledge that while DP provides strong theoretical guarantees, its practical deployment must carefully balance privacy and utility, particularly in scenarios involving non-IID client data and limited participation.

Our proposed system framework demonstrates how FL can be augmented with robust privacy-preserving mechanisms to address contemporary privacy threats, ensuring secure and effective collaborative learning across distributed data sources.

\section{Methodology}\label{sec:method}

This section presents a MOO technique designed to dynamically adjust the clipping norm in FL. The proposed methodology integrates privacy and utility objectives into a unified framework, employing a sample-level DP approach to achieve an optimal balance between the two goals during training.

\subsection{Multi Objective Optimization for Adjusting the Clipping Norm}

In FL, the clipping norm is a pivotal parameter that determines the trade-off between privacy and utility. It governs the extent to which individual gradients are scaled before noise is added, directly influencing the level of DP achieved and the amount of useful information retained in the aggregated updates. To address this, we formulate a composite objective function that combines model utility and clipping norm regularization. The total loss function is defined as:

\begin{equation}
\label{eq:A1}
    \mathcal{L} = \mathcal{L}_{\text{model}} + \kappa \cdot \mathcal{L}_{\text{clipping}},
\end{equation}
where $\mathcal{L}_{\text{model}}$ represents the model loss, ensuring utility by minimizing the error in predictions. The term $\mathcal{L}_{\text{clipping}} = C$ penalizes large clipping norms, which directly influence the sensitivity of updates and the amount of noise required for DP. The regularization weight $\kappa$ controls the trade-off between privacy and utility, enabling fine-grained tuning of the optimization process. By minimizing this composite objective function, the methodology dynamically adapts the clipping norm to balance the competing objectives of privacy preservation and model accuracy.

The clipping norm $C$ is updated during training by calculating the gradient of the composite objective function with respect to $C$. This allows the optimization to respond to the evolving dynamics of the training process. Specifically, the gradient is computed as:
\begin{equation}
\label{eq:A2}
    \nabla_C \mathcal{L} = \kappa - \frac{\partial \mathcal{L}_{\text{model}}}{\partial C} \cdot \frac{1}{C},
\end{equation}
where the term $\frac{\partial \mathcal{L}_{\text{model}}}{\partial C}$ reflects the sensitivity of the model loss to changes in the clipping norm. The clipping norm is then updated using a gradient descent step:
\begin{equation}
\label{eq:A3}
    C \gets C - \eta_C \cdot \nabla_C \mathcal{L},
\end{equation}
where $\eta_C$ is the learning rate for the clipping norm. To ensure stability, the clipping norm is constrained to remain positive and within a reasonable range:
\begin{equation}
    C \gets \max(C, 10^{-3}).
\end{equation}
This ensures that the clipping norm does not become excessively small, which could destabilize training or compromise utility.

The methodology is grounded in a sample-level DP approach, where noise is added to individual client updates after clipping. By clipping gradients at a dynamically optimized norm, the sensitivity of updates is tightly controlled, enabling effective noise calibration. This combination ensures that the contributions of individual clients remain private while maintaining the overall utility of the federated model.

The dynamic adjustment process involves calculating the total loss at each epoch as the sum of the model loss and the clipping regularization term. The gradient of this loss with respect to the clipping norm is used to iteratively refine $C$, ensuring that the trade-off between privacy and utility is optimized throughout training. This integration of dynamic clipping and sample-level DP offers a principled approach to addressing the challenges of FL with DP.

The proposed technique is implemented as follows. First, the total loss for each epoch is computed by combining the model loss and the clipping regularization term. The gradient of the composite loss with respect to the clipping norm is then calculated, and the clipping norm is updated using a gradient descent step. Noise is added to the clipped gradients for DP before aggregating the updates to refine the model. The complete procedure is summarized in Algorithm~\ref{alg1}.

\begin{algorithm}
\label{alg1}
  \caption{Pseudocode of proposed DP-FL with MOO for adjusting the clipping norm}
  \begin{algorithmic}[1]
    \Require $\mathcal{S}_t$ subset of clients selected with a selection probability of $q_c \in (0, 1]$, client $k$'s local dataset, loss function $L(\theta, x_{i})$, learning rate $\eta_t$, learning rate for clipping norm $\eta_c$
    \Ensure Global model update $\theta_{t+1}$

    \State \textbf{Server executes:}
    \State Initialize global model $\theta_{t}$ randomly
    \For{$t \in \{1, 2, ..., T\}$}
      \For{each client $k \in \mathcal{S}_t$}
        \State $\theta_{t+1}^k \gets \text{ClientTraining}(\theta_{t}, k)$
        
      \EndFor

      \State \textbf{Model aggregation:}
      \State $\theta_{t+1} \gets \sum_{k \in \mathcal{S}_t} \frac{n_k}{n_{\mathcal{S}_t}} \theta_{t+1}^k$
    \EndFor
    
    \hrulefill
    \State \textbf{ClientTraining}($\theta_{t}, k$):
    \State \hspace{1.5em} \textbf{for} $n \in \{1, 2, ..., N\}$ \textbf{do}
      \State \hspace{3em} Sample local batch $L_n^k$ with probability $q$
      \State \hspace{3em} \textbf{for} each sample $x_i^k \in L_n^k$ \textbf{do}
        \State \hspace{4.5em}Compute gradient of $\mathcal{L}$ w.r.t $C$: 
         \State \hspace{5.5em}$\nabla_C \mathcal{L} \gets \nabla_{C}\mathcal{L}(\theta_{t}, x_{i}^k)$
        \State \hspace{4.5em}Update clipping norm: 
         \State \hspace{5.5em}$C \gets \max(C - \eta_C \cdot \nabla_C \mathcal{L}, 10^{-3})$
         \State \hspace{4.5em}Compute gradient of $\mathcal{L}$ w.r.t $\theta_t$: 
         \State \hspace{5.5em}$g_t^k(x_i) \gets \nabla_{\theta_t}\mathcal{L}(\theta_{t}, x_{i}^k)$
        \State \hspace{4.5em}Clip gradients:  \State \hspace{5.5em}$\bar{g}_t^k(x_i) \gets g_t^k(x_i)/ \text{max} (1, \dfrac{||g_t^k(x_i)||_2}{C})$
        \State \hspace{4.5em} Add noise for DP: 
         \State \hspace{5.5em}$\tilde{g}_t^k(x_i) \gets \dfrac{1}{L_n^k}(\sum_i \bar{g}_t^k(x_i) + \mathcal{N}(0, \sigma^2C^2I))$
        \State \hspace{4.5em}Update model: 
        \State \hspace{5.5em} $\theta_{t+1}^k \gets \theta_t - \eta_t\tilde{g}_t^k(x_i)$
        
      \State \hspace{3em} \textbf{end for}
    \State \hspace{1.5em} \textbf{end for}
    \State \hspace{1.5em} \textbf{Return $\theta_{t+1}^k$}

  \end{algorithmic}
\end{algorithm}

By integrating sample-level DP with dynamic clipping norm optimization, this methodology ensures a flexible and effective means to manage the trade-off between privacy and utility in FL. The overall workflow of the proposed DP-FL framework with multi-objective optimization for adaptive clipping norm adjustment is illustrated in Fig.~\ref{fig:moo_overview}.

\begin{figure*}[!t]
    \centering
    \includegraphics[width=\linewidth]{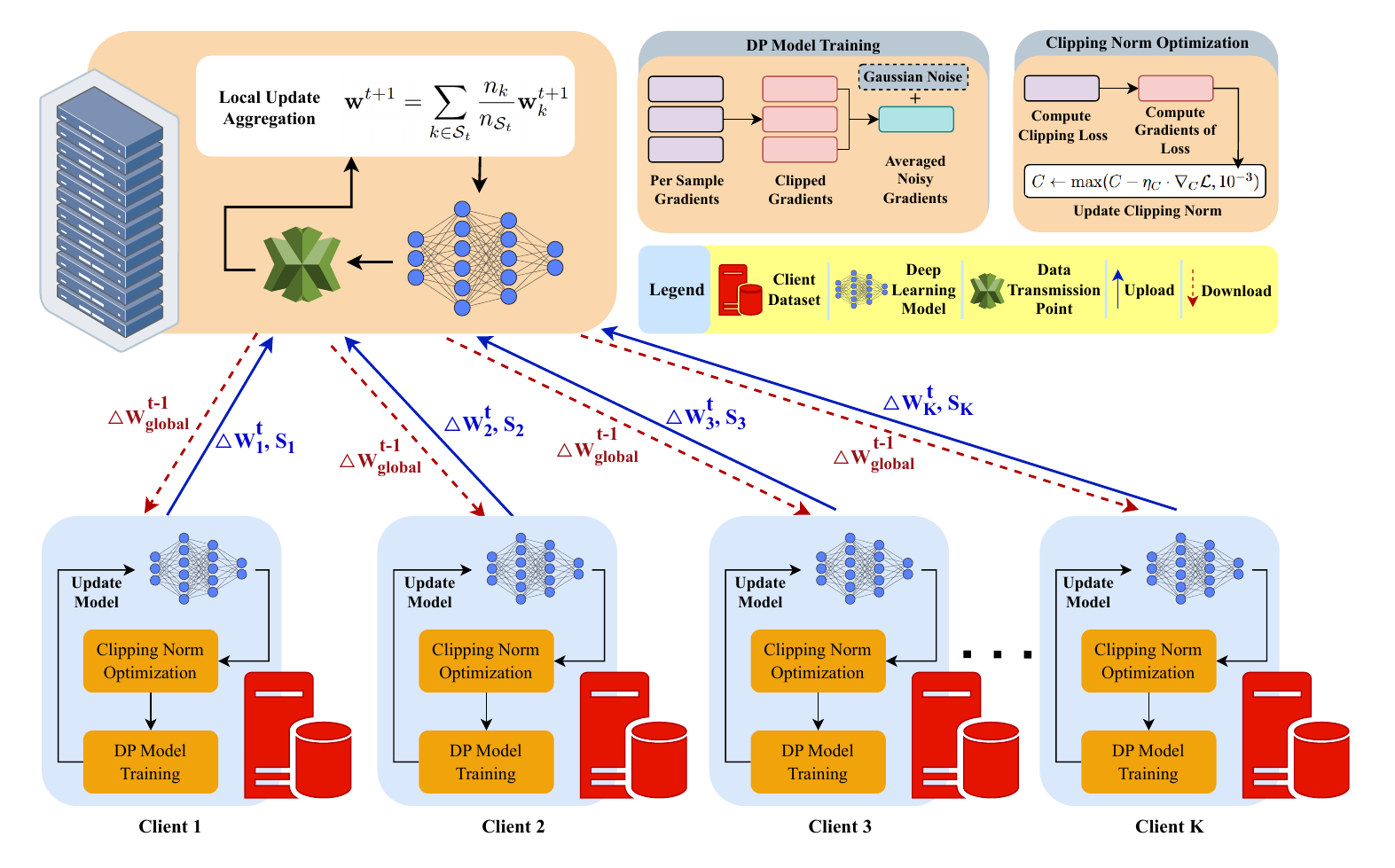}
    \vspace{-3em}
    \caption{An overview of the proposed DP-FL framework with multi-objective optimization (MOO) for adaptive clipping norm adjustment.}
    \label{fig:moo_overview}
\end{figure*}

\subsection{Convexity Analysis of the Optimization Problem}

In this section, we analyze the convexity of the MOO problem defined in \ref{eq:A1}. Given that $\mathcal{L}_{clipping}=C$, we can reformulate this as:
\begin{equation}
    \mathcal{L}(C) = \mathcal{L}_{\text{model}}(C) + \kappa \cdot C,
\end{equation}
where $\mathcal{L}_{\text{model}}(C)$ is the model loss term dependent on the clipping norm $C$, and $\kappa \cdot C$ represents the linear regularization term. Here, $\kappa > 0$ is a regularization parameter, and $C > 0$ denotes the clipping norm. To determine whether this optimization problem is well-posed and possesses a global minimum, we conduct a detailed convexity analysis under the Polyak-\L{}ojasiewicz (PL) condition.

A function $f(x)$ satisfies the PL condition if there exists a constant $\mu > 0$ such that:
\begin{equation}
    \frac{1}{2} \|\nabla f(x)\|^2 \geq \mu (f(x) - f^*),
\end{equation}
where $f^*$ is the optimal function value. Unlike strong convexity, which directly imposes curvature constraints, the PL condition ensures that gradient norm decay implies convergence to the optimal solution. This property is sufficient to guarantee global convergence in many machine learning problems.

The regularization term $\kappa \cdot C$ is linear in $C$, and its second derivative is
\begin{equation}
    \frac{d^2}{dC^2} (\kappa \cdot C) = 0.
\end{equation}
Thus, it is trivially convex.

For the model loss term, we use the PL condition to show convergence behavior. Differentiating $\mathcal{L}_{\text{model}}(C)$, we obtain:
\begin{equation}
    \mathcal{L}_{\text{model}}(C) - \mathcal{L}_{\text{model}}(C^*) \leq \frac{1}{2\mu} \|\nabla \mathcal{L}_{\text{model}}(C)\|^2.
\end{equation}
Applying the Lipschitz continuity assumption:
\begin{equation}
    \|\nabla \mathcal{L}_{\text{model}}(C) - \nabla \mathcal{L}_{\text{model}}(C')\| \leq M |C - C'|,
\end{equation}
we conclude that the loss function is smooth. Combining the PL condition and smoothness, we derive that $\mathcal{L}_{\text{model}}(C)$ exhibits a form of pseudo-convexity, ensuring that gradient-based updates lead to optimal convergence.

Since the composite function $\mathcal{L}(C)$ consists of a convex regularization term and a PL-satisfying loss term, its convergence is ensured. The first-order optimality condition is given by:
\begin{equation}
    \frac{d \mathcal{L}}{dC} = \kappa - \frac{\partial \mathcal{L}_{\text{model}}}{\partial C} \cdot \frac{1}{C}.
\end{equation}
Setting $\frac{d \mathcal{L}}{dC} = 0$, we obtain:
\begin{equation}
    \kappa \cdot C = \frac{\partial \mathcal{L}_{\text{model}}}{\partial C}.
\end{equation}
Solving this equation numerically yields the optimal clipping norm $C^*$, and the PL condition guarantees convergence to this minimum.

By combining the convexity of the regularization term and the PL-based convergence properties of the model loss term, we conclude that the composite loss function $\mathcal{L}(C)$ is well-posed and possesses a unique minimizer. This analysis ensures that the optimization problem is solvable and that gradient-based methods will efficiently converge to the optimal clipping norm.

\subsection{Convergence Analysis}

In this section, we analyze the convergence performance of our proposed FL algorithm with adaptive clipping and sample-level DP. Our objective is to establish how different factors, including the choice of the clipping norm $C$, affect convergence, particularly due to its direct influence on the noise variance $\sigma^2$ introduced for DP. We begin by making the following \textbf{assumptions:}

\begin{enumerate} 
    \item \textbf{Bounded Gradient Dissimilarity:} There exist constants $B_1$ and $B_2$ such that:
    \begin{equation}
        \sum_{k=1}^{K} \|\nabla L_k(\theta) - \nabla L(\theta) \|^2 \leq B_1 \|\nabla L(\theta)\|^2 + \frac{B_2^2}{K}.
    \end{equation}
    
    \item \textbf{Lipschitz Continuity:} The local loss function $L_k(\theta)$ is $M$-Lipschitz continuous, meaning that there exists a constant $M$ such that:
    \begin{equation}
        \|\nabla L_k(\theta) - \nabla L_k(\theta')\| \leq M \|\theta - \theta'\|, \quad \forall \theta, \theta'.
    \end{equation}
    
    \item \textbf{Polyak-Łojasiewicz (PL) Inequality:} The global loss function $L(\theta)$ satisfies the PL inequality, i.e., there exists a positive scalar $\mu > 0$ such that:
    \begin{equation}
        \frac{1}{2} \|\nabla L(\theta)\|^2 \geq \mu (L(\theta) - L(\theta^*)),
    \end{equation}
    where $\theta^*$ is the minimizer of $L(\theta)$.
\end{enumerate}

The first assumption, bounded gradient dissimilarity, ensures that the variance in client gradients is controlled, limiting the deviation of local model updates from the global direction. This is critical for stable aggregation and prevents excessive divergence in heterogeneous settings. The second assumption, Lipschitz continuity, guarantees that the gradient changes smoothly, preventing abrupt shifts in the optimization landscape and ensuring that updates remain predictable. The third assumption, the PL condition, provides a relaxation of strong convexity, ensuring that the gradient norm remains a valid indicator of progress toward convergence, even in non-convex settings.

\begin{lemma}
\label{thm:convergence_FL_DP}
Consider the sequence of model parameters ${\theta_t}$ where $t \geq 0$, generated by the proposed FL algorithm (Algorithm 1). Assume that each local loss function $L_k(\theta)$ satisfies the Lipschitz continuity condition and that the gradient dissimilarity is bounded. Under these assumptions, the expected difference between the global loss function $L(\theta)$ at iteration $t+1$ and the optimal loss $L(\theta^*)$ is bounded by the following expression:

\begin{equation}
    \begin{aligned}
        & \mathbb{E}[L(\theta_{t+1})] - L(\theta^*) 
        \leq \triangle_t \mathbb{E}[L(\theta_t) - L(\theta^*)] + c_t +\\& \frac{\eta_t}{2} \left[-1 + \lambda M \eta_t \left(\frac{B_1 + K}{K} \right) \right] 
        \left\| \sum_{k=1}^{K} \frac{n_k}{n_{\mathcal{S}_t}} \nabla_{\theta_{t}^k} \widehat{L}_k(\theta_t, x_i^k) \right\|^2 \\
        &\quad + B_t \sum_{j=t_c+1}^{t-1} \eta_j^2 
        \left\| \sum_{k=1}^{K} \frac{n_k}{n_{\mathcal{S}_t}} \nabla_{\theta_{t}^k} \widehat{L}_k(\theta_t, x_i^k) \right\|^2.
    \end{aligned}
\end{equation}

where the parameters are defined as follows:

\begin{equation}
    \triangle_t = 1 - \mu \eta_t,
\end{equation}
\begin{equation}
    c_t = \frac{\eta_t M B_2^2}{K} \left[\frac{\eta_t}{2} + \frac{M(K+1)}{K} \sum_{j=t_c+1}^{t-1} \eta_j^2 \right],
\end{equation}
\begin{equation}
    B_t = \frac{\lambda (K+1) \eta_t M^2}{K^2} (B_1 + N),
\end{equation}

\begin{equation}
\begin{aligned}
    \nabla_{\theta_{t}^k} \widehat{L}_k(\theta_{t}, x_{i}^k) = \nabla_{\theta_{t}^k} L_k(\theta_{t}, x_{i}^k) + \mathcal{N}\left(0, \sigma^2 \right).
\end{aligned}
\end{equation}
\end{lemma}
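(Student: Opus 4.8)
The plan is to follow the standard smoothness-plus-PL recipe for noisy local SGD, isolating the three error sources—the zero-mean DP noise, the cross-client gradient dissimilarity, and the drift of the local iterates between synchronizations—and charging each to one of the three additive pieces $c_t$, the bracketed gradient coefficient, and $B_t\sum_j\eta_j^2$. I would start from the descent inequality implied by Assumption~2 ($M$-smoothness): the gradient-Lipschitz bound gives
\begin{equation}
L(\theta_{t+1})\le L(\theta_t)+\langle\nabla L(\theta_t),\,\theta_{t+1}-\theta_t\rangle+\tfrac{M}{2}\|\theta_{t+1}-\theta_t\|^2,
\end{equation}
into which I substitute the aggregation step $\theta_{t+1}-\theta_t=-\eta_t\sum_{k}\frac{n_k}{n_{\mathcal{S}_t}}\nabla_{\theta_t^k}\widehat L_k(\theta_t,x_i^k)$ from Algorithm~1, where $\widehat L_k$ already carries the injected Gaussian DP noise.

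The second step is to take the expectation over the DP noise and the minibatch/client sampling. Because $\mathcal{N}(0,\sigma^2)$ is zero-mean and independent of $\theta_t$, the linear cross term reduces to the noiseless inner product $-\eta_t\langle\nabla L(\theta_t),\sum_k\frac{n_k}{n_{\mathcal{S}_t}}\nabla L_k\rangle$, while the noise is retained inside the quadratic $\|\sum_k\frac{n_k}{n_{\mathcal{S}_t}}\nabla\widehat L_k\|^2$ exactly as it appears on the right-hand side of the statement. I would then rewrite the inner product through polarization and bound the mismatch between the aggregated local gradient and the global gradient $\|\nabla L(\theta_t)-\sum_k\frac{n_k}{n_{\mathcal{S}_t}}\nabla L_k\|^2$ using the bounded-dissimilarity Assumption~1. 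This is where $B_1\|\nabla L\|^2+B_2^2/K$ enters, simultaneously producing the $B_2^2$ term that feeds $c_t$ and the $B_1$-dependent factors $(B_1+K)/K$ and $(B_1+N)$ that appear in the bracketed coefficient and in $B_t$.

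The third and most delicate step is accounting for local drift. Since each client runs $N$ local SGD epochs before the round-$t$ synchronization (call it $t_c$), the gradients are evaluated at the drifted local iterates $\theta_t^k$ rather than at the common $\theta_t$. Unrolling the local recursion and applying Lipschitz continuity bounds the drift $\|\theta_t^k-\theta_{t_c}\|$ by an accumulation of past step sizes; squaring, re-applying $M$-smoothness, and collapsing the per-client sums with Jensen/Cauchy--Schwarz yields the telescoping term $\sum_{j=t_c+1}^{t-1}\eta_j^2\|\cdot\|^2$ with coefficient $B_t$ together with the matching $\sum_j\eta_j^2$ inside $c_t$. The main obstacle will be the bookkeeping of the $(K+1)/K$, $\lambda$, and $(B_1+N)$ constants through these repeated inequalities, because any slack introduced here propagates into all three coefficients and would break the precise form of $c_t$, $B_t$, and the bracket $-1+\lambda M\eta_t(B_1+K)/K$.

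Finally, I would invoke the PL inequality (Assumption~3) to replace $-\tfrac12\eta_t\|\nabla L(\theta_t)\|^2$ by $-\mu\eta_t\big(L(\theta_t)-L(\theta^*)\big)$, converting the descent into the geometric contraction $\triangle_t=1-\mu\eta_t$ multiplying $\mathbb{E}[L(\theta_t)-L(\theta^*)]$. Grouping the zero-mean-noise variance and the $B_2^2$ dissimilarity floor into $c_t$, consolidating the coefficient of $\|\sum_k\frac{n_k}{n_{\mathcal{S}_t}}\nabla\widehat L_k\|^2$ into $\tfrac{\eta_t}{2}\big[-1+\lambda M\eta_t(B_1+K)/K\big]$, and isolating the accumulated drift as $B_t\sum_{j=t_c+1}^{t-1}\eta_j^2\|\cdot\|^2$, then subtracting $L(\theta^*)$ from both sides, yields the stated recursion.
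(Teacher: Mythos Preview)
Your proposal is correct and follows essentially the same route as the paper: the $M$-smoothness descent inequality, expectation over the DP noise and client sampling, polarization of the inner product, a Lipschitz-based bound on the local drift $\|\theta_t-\theta_t^k\|^2$ via unrolling the local recursion back to the last synchronization $t_c$, the PL inequality to produce the $1-\mu\eta_t$ contraction, and a final regrouping into $c_t$, $B_t$, and the bracketed coefficient. One small correction to your bookkeeping: the $(B_1+K)/K$ and $B_2^2/K$ factors enter through the variance decomposition of the \emph{quadratic} term $\mathbb{E}\|\widehat g_t\|^2$ (Assumption~1 bounds $\mathbb{E}\|\widehat g_t-g_t\|^2$, and the paper then invokes a weighted-gradient-diversity ratio $\lambda$ to pass from $\sum_k p_k\|\nabla\widehat L_k\|^2$ to $\|\sum_k p_k\nabla\widehat L_k\|^2$), whereas the polarization mismatch in the linear term is handled purely by Lipschitz continuity and the drift bound you describe in your third paragraph.
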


\begin{proof}
See Appendix~\ref{sec:app1}.
\end{proof}

One key observation from Theorem \ref{thm:convergence_FL_DP} is the role of the learning rate $\eta_t$ in determining the convergence speed. The condition $\triangle_t = 1 - \mu \eta_t < 1$ ensures that the expected loss decreases over time, provided that $\eta_t$ is chosen appropriately relative to the strong convexity parameter $\mu$. A large $\eta_t$ accelerates convergence but may introduce instability, while a small $\eta_t$ slows progress. Additionally, the presence of Gaussian noise, parameterized by $\sigma^2$, influences the convergence rate. Since $\sigma^2$ is proportional to the clipping norm $C$, larger values of $C$ lead to higher noise levels, which can slow down convergence. Conversely, smaller values of $C$ reduce the noise but may lead to excessive gradient clipping, adversely impacting utility. Thus, the optimal selection of $C$ is critical in balancing privacy and model accuracy.

\begin{lemma}
\label{lemma:multi_obj_convergence}
Consider the MOO problem defined in \ref{eq:A1}
where $\mathcal{L}_{\text{model}}(C)$ satisfies the PL condition and is $L$-smooth. Assume that the gradient of the model loss satisfies bounded variance, 
Under an appropriate step size $\eta_c$, the iterates of the clipping norm update rule in \ref{eq:A3} converge linearly to an optimal clipping norm $C^*$, satisfying:
\begin{equation}
    \mathbb{E}[\mathcal{L}(C_t) - \mathcal{L}(C^*)] \leq (1 - 2 \mu \eta_c)^t (\mathcal{L}(C_0) - \mathcal{L}(C^*)) + \frac{\eta_c \sigma_g^2}{2 \mu},
\end{equation}
where $0 < \eta_c \leq \frac{2}{L}$ ensures stability.
\end{lemma}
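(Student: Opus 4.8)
The plan is to follow the standard template for analyzing stochastic gradient descent under the Polyak-\L{}ojasiewicz inequality, specialized to the scalar iterate $C$. First I would record that the composite objective $\mathcal{L}(C) = \mathcal{L}_{\text{model}}(C) + \kappa C$ inherits $L$-smoothness directly from $\mathcal{L}_{\text{model}}$: the regularizer $\kappa C$ is linear, so $\nabla_C \mathcal{L}(C) = \nabla_C \mathcal{L}_{\text{model}}(C) + \kappa$ and the second derivative of $\mathcal{L}$ coincides with that of $\mathcal{L}_{\text{model}}$. I would then treat the quantity driving the update \ref{eq:A3} as an unbiased stochastic estimate $g_t$ of $\nabla_C \mathcal{L}(C_t)$, so that $\mathbb{E}[g_t \mid C_t] = \nabla_C \mathcal{L}(C_t)$ with variance controlled by the bounded-variance hypothesis, $\mathbb{E}[\,\|g_t - \nabla_C\mathcal{L}(C_t)\|^2 \mid C_t] \le \sigma_g^2$.

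The core is a one-step descent inequality. Applying the $L$-smoothness (descent) lemma to $C_{t+1} = C_t - \eta_c g_t$ gives
\begin{equation}
\mathcal{L}(C_{t+1}) \le \mathcal{L}(C_t) - \eta_c \langle \nabla_C \mathcal{L}(C_t), g_t\rangle + \frac{L\eta_c^2}{2}\|g_t\|^2 .
\end{equation}
Taking the conditional expectation given $C_t$, using unbiasedness, and bounding $\mathbb{E}[\|g_t\|^2\mid C_t] \le \|\nabla_C\mathcal{L}(C_t)\|^2 + \sigma_g^2$ produces
\begin{equation}
\mathbb{E}[\mathcal{L}(C_{t+1})\mid C_t] \le \mathcal{L}(C_t) - \eta_c\!\left(1 - \tfrac{L\eta_c}{2}\right)\|\nabla_C\mathcal{L}(C_t)\|^2 + \frac{L\eta_c^2\sigma_g^2}{2}.
\end{equation}
The restriction $0 < \eta_c \le 2/L$ is exactly what keeps the coefficient $1 - L\eta_c/2$ nonnegative, so the gradient term is a genuine descent contribution. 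Invoking the PL inequality $\tfrac12\|\nabla_C\mathcal{L}(C_t)\|^2 \ge \mu(\mathcal{L}(C_t)-\mathcal{L}(C^*))$ then replaces the squared-gradient term by the suboptimality gap, yielding the linear recursion
\begin{equation}
\mathbb{E}[\mathcal{L}(C_{t+1})-\mathcal{L}(C^*)\mid C_t] \le (1 - 2\mu\eta_c)\big(\mathcal{L}(C_t)-\mathcal{L}(C^*)\big) + \frac{L\eta_c^2\sigma_g^2}{2}.
\end{equation}

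Finally I would take total expectations, iterate the recursion from $t$ down to $0$, and sum the geometric series: with contraction factor $\rho = 1-2\mu\eta_c \in (0,1)$ and additive constant $b$, one obtains $\mathbb{E}[\mathcal{L}(C_t)-\mathcal{L}(C^*)] \le \rho^t(\mathcal{L}(C_0)-\mathcal{L}(C^*)) + b\sum_{j=0}^{t-1}\rho^j \le \rho^t(\mathcal{L}(C_0)-\mathcal{L}(C^*)) + \frac{b}{2\mu\eta_c}$, which reproduces the claimed bound with noise floor $\tfrac{\eta_c\sigma_g^2}{2\mu}$. I expect the main obstacle to be the bookkeeping of constants so that the gradient-term coefficient collapses cleanly to $2\mu\eta_c$ and the residual geometric sum matches $\tfrac{\eta_c\sigma_g^2}{2\mu}$ exactly; the stated constants correspond to the regime where $\eta_c$ is small enough that $1 - L\eta_c/2 \approx 1$, and a fully rigorous version would either absorb this curvature penalty into the step-size choice or carry it explicitly. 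A secondary subtlety is justifying that the PL inequality applies to the composite $\mathcal{L}$ rather than only to $\mathcal{L}_{\text{model}}$; since the convexity analysis already argues that the linear regularizer preserves the well-posedness and PL-type behavior of the objective, I would appeal to that argument rather than re-derive it.
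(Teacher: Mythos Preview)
Your proposal is correct and follows essentially the same approach as the paper: apply the $L$-smoothness descent lemma to the update $C_{t+1}=C_t-\eta_c g_t$, invoke the PL inequality to convert the squared-gradient term into the suboptimality gap, and unroll the resulting contraction. Your treatment is in fact more careful than the paper's, which substitutes the deterministic gradient and then simply asserts that ``incorporating gradient noise variance $\sigma_g^2$'' yields the stated bound, whereas you explicitly split $\mathbb{E}[\|g_t\|^2]$, iterate the recursion, and correctly flag that matching the claimed constants requires absorbing the $1-L\eta_c/2$ factor into the step-size choice.
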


\begin{proof}
Using the $L$-smooth property of $\mathcal{L}(C)$, we have:
\begin{equation}
    \mathcal{L}(C_{t+1}) \leq \mathcal{L}(C_t) + \nabla \mathcal{L}(C_t) (C_{t+1} - C_t) + \frac{L}{2} (C_{t+1} - C_t)^2.
\end{equation}
Substituting $C_{t+1} - C_t = -\eta_c \nabla \mathcal{L}(C_t)$ gives:
\begin{equation}
    \mathcal{L}(C_{t+1}) \leq \mathcal{L}(C_t) - \eta_c \|\nabla \mathcal{L}(C_t)\|^2 + \frac{L \eta_c^2}{2} \|\nabla \mathcal{L}(C_t)\|^2.
\end{equation}
Applying the PL condition:
\begin{equation}
    \frac{1}{2} \|\nabla \mathcal{L}(C_t)\|^2 \geq \mu (\mathcal{L}(C_t) - \mathcal{L}(C^*)).
\end{equation}
Rearranging terms and incorporating gradient noise variance $\sigma_g^2$, we obtain the geometric convergence bound with an error floor due to variance:
\begin{equation}
    \mathbb{E}[\mathcal{L}(C_t) - \mathcal{L}(C^*)] \leq (1 - 2 \mu \eta_c)^t (\mathcal{L}(C_0) - \mathcal{L}(C^*)) + \frac{\eta_c \sigma_g^2}{2 \mu}.
\end{equation}
This result shows that while the iterates of $C$ converge, the final error is influenced by gradient variance, which is in turn affected by the choice of the clipping norm.
\end{proof}

This lemma establishes that the adaptive clipping norm optimization converges linearly, ensuring stability and effectiveness in balancing privacy and utility. By dynamically adjusting $C$, we can mitigate the negative impact of DP noise on model accuracy while maintaining privacy guarantees. The combination of Theorem \ref{thm:convergence_FL_DP} and Lemma \ref{lemma:multi_obj_convergence} demonstrates that our proposed technique achieves both DP and efficient model training without excessive degradation in performance.

\section{Results and Discussion}\label{sec:results}

\subsection{Experimental Setup}
Our experiments were implemented in Python using the PyTorch framework, leveraging an Intel Core i7 11th generation processor alongside an NVIDIA RTX 3080 GPU for accelerated computations. To monitor the accumulated privacy loss, we employed the Rényi Differential Privacy (RDP) accountant provided by Google’s DP library.

To evaluate our adaptive clipping mechanism, we conducted experiments on three benchmark datasets: MNIST\cite{lecun1998mnist}, Fashion-MNIST\cite{https://doi.org/10.48550/arxiv.1708.07747}, and CIFAR-10\cite{Krizhevsky09learningmultiple}. To train on these datasets, we employ convolutional neural network (CNN) architectures designed to handle the complexity of each dataset efficiently. The architectural details are presented in Tables \ref{tab:mnist_fmnist} and \ref{tab:cifar10}.

To ensure fair performance evaluation, our method's starting clipping norm was set to match the static clipping norms used in FL w/ DP-SGD, FL w/ DP-FedAvg, FL w/ DP-SGD-WAV, and DP-FL w/ Tempered Sigmoid, as well as the starting clipping norm of FL w/ Dynamic DP. While some of these baseline methods use fixed clipping norms and others employ adaptive strategies, our approach uniquely integrates a multi-objective optimization framework that dynamically adjusts the clipping bound based on training dynamics, ensuring an optimal balance between privacy and model utility. This ensures that our comparisons isolate the effects of adaptive clipping rather than differences in initialization.

A comprehensive hyperparameter tuning process was conducted to optimize the performance of our proposed approach while maintaining privacy guarantees. Various hyperparameters, including learning rate, batch size, weight decay, and network depth, were explored using a combination of domain expertise and automated search methods such as grid search. The privacy parameter $\epsilon$ was systematically varied to assess the trade-off between privacy preservation and model accuracy, ensuring a balanced approach.

Early stopping was not applied to any method during training to ensure a fair comparison of convergence behavior. The number of communication rounds was fixed across all methods to 200 to eliminate any bias introduced by training duration. These configurations ensure that the reported improvements stem from the adaptive clipping mechanism rather than variations in stopping criteria.

\begin{table}[t!]
\centering
\caption{CNN model architecture used for MNIST and Fashion-MNIST datasets.}
\label{tab:mnist_fmnist}
\begin{tabular}{ccc}
\hline
\textbf{Layer} & \textbf{Type} & \textbf{Parameters} \\
\hline \hline
1 & Input & 28x28 Grayscale Image \\
2 & 2D Convolution & 16 filters, 8x8 kernel, stride 2, ReLU \\
3 & 2D Max-Pooling & 2x2 kernel \\
4 & 2D Convolution & 32 filters, 4x4 kernel, stride 2, ReLU \\
5 & 2D Max-Pooling & 2x2 kernel \\
6 & Fully Connected & 32 units, ReLU \\
7 & Output & 10 units, Softmax \\
\hline
\end{tabular}
\end{table}

\begin{table}[t!]
\centering
\caption{CNN model architecture used for CIFAR-10 dataset.}
\label{tab:cifar10}
\begin{tabular}{ccc}
\hline
\textbf{Layer} & \textbf{Type} & \textbf{Parameters} \\
\hline \hline
1  & Input  & 32x32x3 RGB Image \\
2  & 2D Convolution  & 32 filters, 3x3 kernel, stride 1, ReLU \\
3  & 2D Average-Pooling & 2x2 kernel, stride 2 \\
4  & 2D Convolution  & 64 filters, 3x3 kernel, stride 1, ReLU \\
5  & 2D Average-Pooling & 2x2 kernel, stride 2 \\
6  & 2D Convolution  & 64 filters, 3x3 kernel, stride 1, ReLU \\
7  & 2D Average-Pooling & 2x2 kernel, stride 2 \\
8  & 2D Convolution  & 128 filters, 3x3 kernel, stride 1, ReLU \\
9  & \begin{tabular}[c]{@{}c@{}}2D Adaptive\\ Average-Pooling\end{tabular} & - \\
10 & Output & 10 units, Softmax \\
\hline
\end{tabular}
\end{table}

\subsection{Datasets}
The evaluation of our proposed method was conducted using three widely used benchmark datasets for image classification tasks: MNIST, Fashion-MNIST (FMNIST), and CIFAR-10. Each dataset presents unique challenges, allowing us to assess the adaptability and effectiveness of our approach across different data distributions and complexities.

\textbf{MNIST:} The MNIST dataset consists of 70,000 grayscale images of handwritten digits, ranging from 0 to 9. Each image has a resolution of $28\times28$ pixels, with 60,000 samples used for training and 10,000 for testing. As a simple yet fundamental dataset in machine learning research, MNIST serves as a baseline for evaluating the effectiveness of different algorithms in recognizing handwritten characters.

\textbf{Fashion-MNIST (FMNIST):} FMNIST is a drop-in replacement for MNIST, consisting of 70,000 grayscale images of various clothing items, such as shirts, trousers, and shoes, categorized into 10 distinct classes. Like MNIST, each image has a resolution of $28\times28$ pixels, with the same 60,000/10,000 split for training and testing. This dataset is more complex than MNIST due to increased intra-class variations, making it a more challenging benchmark for evaluating classification models.

\textbf{CIFAR-10:} The CIFAR-10 dataset is a collection of 60,000 color images, each of size $32\times32$ pixels, spanning 10 object categories, including animals and vehicles. Unlike MNIST and FMNIST, CIFAR-10 presents significantly more visual complexity, requiring models to learn rich feature representations. The dataset is split into 50,000 training images and 10,000 test images. Given its higher-dimensional color images and diverse classes, CIFAR-10 is often used to assess the scalability and robustness of deep learning models.

Each of these datasets was partitioned among FL clients to simulate realistic decentralized learning environments. The non-iid (non-independent and identically distributed) nature of the data distributions across clients further challenged the learning process, making it an ideal setting to evaluate the efficacy of our adaptive clipping mechanism.
\begin{table*}[t!]
\centering
\caption{Classification accuracy (\%) of different FL methods under varying privacy budgets ($\varepsilon$) across MNIST, Fashion-MNIST, and CIFAR-10 datasets. The privacy parameter 
$\delta$ is fixed at $10^{-5}$ for all experiments. }
\label{tab:results}
\begin{tabular}{@{}cccccccccc@{}}
\toprule
\multirow{2}{*}{\textbf{Method}} & \multicolumn{3}{c}{\textbf{MNIST}}                                                               & \multicolumn{3}{c}{\textbf{Fashion MNIST}}                                                       & \multicolumn{3}{c}{\textbf{CIFAR10}}                                                             \\ \cmidrule(l){2-10} 
                                 & \multicolumn{3}{c}{\textbf{\begin{tabular}[c]{@{}c@{}}Classification\\ (Accuracy)\end{tabular}}} & \multicolumn{3}{c}{\textbf{\begin{tabular}[c]{@{}c@{}}Classification\\ (Accuracy)\end{tabular}}} & \multicolumn{3}{c}{\textbf{\begin{tabular}[c]{@{}c@{}}Classification\\ (Accuracy)\end{tabular}}} \\ \midrule
Non-Private FL                   & \multicolumn{3}{c}{96.25\%}                                                                      & \multicolumn{3}{c}{89.45\%}                                                                      & \multicolumn{3}{c}{72.61\%}                                                                      \\ \cmidrule(l){2-10} 
                                 & $\varepsilon$=6.38                    & $\varepsilon$=3.61                    & $\varepsilon$=1.64                    & $\varepsilon$=6.38                    & $\varepsilon$=3.61                    & $\varepsilon$=1.64                    & $\varepsilon$=6.38                    & $\varepsilon$=3.61                    & $\varepsilon$=1.64                    \\ \cmidrule(l){2-10} 
FL w/ DP-SGD                     & 86.39\%                        & 81.58\%                        & 76.43\%                        & 77.19\%                        & 69.35\%                        & 58.14\%                        & 60.53\%                        & 53.68\%                        & 48.88\%                        \\
FL w/ DP-FedAvg                  & 88.42\%                        & 82.50\%                        & 76.58\%                        & 75.18\%                        & 68.91\%                        & 53.58\%                        & 59.17\%                        & 51.85\%                        & 43.06\%                        \\
FL w/ Dynamic DP                 & 87.71\%                        & 86.59\%                        & 82.81\%                        & 79.62\%                        & 78.44\%                        & 69.82\%                        & 62.61\%                        & 59.02\%                        & 53.23\%                        \\
FL w/ DP-SGD-WAV                 & 90.24\%                        & 89.72\%                        & 85.61\%                        & 83.53\%                        & 82.16\%                        & 72.37\%                        & 64.58\%                        & 61.18\%                        & 57.85\%                        \\
DP-FL w/ Tempered Sigmoid        & 90.92\%                        & 90.02\%                        & 84.39\%                        & 82.73\%                        & 81.49\%                        & 70.58\%                        & 64.15\%                        & 60.59\%                        & 56.02\%                        \\

DP-FL w/ MOO                     & 91.10\%                        & 90.21\%                        & 88.16\%                        & 84.20\%                        & 83.27\%                        & 74.78\%                        & 64.37\%                        & 62.92\%                        & 59.89\%                        \\ \bottomrule
\end{tabular}
\end{table*}

\subsection{Discussion}

Table \ref{tab:results} summarizes the classification accuracy of different FL methods under varying privacy budgets ($\epsilon$ values). The results highlight the trade-off between privacy and model accuracy, demonstrating the effectiveness of our proposed method. Our approach, DP-FL with MOO, consistently outperforms the highest-performing state-of-the-art method, whether it is DP-FL with tempered sigmoid or FL with DP-SGD-WAV, across most datasets and privacy levels. For MNIST, DP-FL with MOO achieves an accuracy improvement of 0.18\% at $\epsilon=6.38$, 0.19\% at $\epsilon=3.61$, and 2.55\% at $\epsilon=1.64$ compared to the best competing method. In Fashion-MNIST, our method surpasses the best alternative by 0.67\% at $\epsilon=6.38$, 1.11\% at $\epsilon=3.61$, and 2.41\% at $\epsilon=1.64$. The results on CIFAR-10 further demonstrate our method’s advantage, outperforming the best competing technique by 1.74\% at $\epsilon=3.61$ and 2.04\% at $\epsilon=1.64$. However, at $\epsilon=6.38$ for CIFAR-10, DP-FL with MOO falls slightly short, achieving 0.21\% lower accuracy than FL with DP-SGD-WAV, which performs best in this setting. 

These improvements indicate that our adaptive clipping mechanism enhances model utility under strict privacy constraints, ensuring improved classification accuracy across different datasets. As expected, lower values of $\epsilon$ (stronger privacy guarantees) result in decreased accuracy due to increased noise injection for DP. However, our method mitigates this degradation more effectively than existing techniques by dynamically optimizing the clipping norm. Compared to fixed-clipping methods, our adaptive strategy prevents over-clipping, thereby preserving useful gradient information. This leads to more stable training and better generalization, particularly for high-dimensional datasets like CIFAR-10. The findings validate the effectiveness of our MOO approach in FL, demonstrating superior performance over existing techniques while maintaining strong privacy guarantees.

\section{Conclusion}\label{sec:conclusion}

In this work, we proposed an adaptive clipping mechanism for DP-FL that optimally balances the trade-off between privacy and model utility. Our approach leverages a MOO framework to dynamically adjust the clipping norm during training, mitigating the negative impact of excessive noise injection while ensuring rigorous privacy guarantees. Theoretical analysis established the convergence properties of our method, demonstrating its effectiveness in maintaining stable model updates. Extensive experiments conducted on MNIST, Fashion-MNIST, and CIFAR-10 showed that our method consistently outperforms existing state-of-the-art approaches across different privacy budgets, with particularly strong improvements in lower privacy regimes. While our approach performs well in most settings, future work could explore more advanced adaptive strategies to further refine the clipping norm selection and improve performance on complex datasets such as CIFAR-10 under high privacy constraints. Overall, our results demonstrate the potential of adaptive clipping in enabling privacy-preserving federated learning without significant loss in model performance.

\bibliographystyle{IEEEtran}
\bibliography{main}

\newpage
\appendix

\subsection{Proof of Lemma~\ref{thm:convergence_FL_DP} }
\label{sec:app1}
\textbf{Lemma: } Consider the sequence of model parameters ${\theta_t}_{t \geq 0}$ generated by the proposed FL algorithm (Algorithm \ref{alg1}). Assume that each local loss function $L_k(\theta)$ satisfies the Lipschitz continuity condition and that the gradient dissimilarity is bounded. Under these assumptions, the expected difference between the global loss function $L(\theta)$ at iteration $t+1$ and the optimal loss $L(\theta^*)$ is bounded by the following expression:

\begin{equation}
    \begin{aligned}
        \mathbb{E}&[L(\theta_{t+1})] - L(\theta^*) 
        \leq \triangle_t \mathbb{E}[L(\theta_t) - L(\theta^*)] + c_t +\\
        &\frac{\eta_t}{2} \left[-1 + \lambda M \eta_t \left(\frac{B_1 + K}{K} \right) \right] 
        \left\| \sum_{k=1}^{K} \frac{n_k}{n_{\mathcal{S}_t}} \nabla_{\theta_{t}^k} \widehat{L}_k(\theta_t, x_i^k) \right\|^2 \\
        &\quad + B_t \sum_{j=t_c+1}^{t-1} \eta_j^2 
        \left\| \sum_{k=1}^{K} \frac{n_k}{n_{\mathcal{S}_t}} \nabla_{\theta_{t}^k} \widehat{L}_k(\theta_t, x_i^k) \right\|^2.
    \end{aligned}
\end{equation}

where the parameters are defined as follows:

\begin{equation}
    \triangle_t = 1 - \mu \eta_t,
\end{equation}
\begin{equation}
    c_t = \frac{\eta_t M B_2^2}{K} \left[\frac{\eta_t}{2} + \frac{M(K+1)}{K} \sum_{j=t_c+1}^{t-1} \eta_j^2 \right],
\end{equation}
\begin{equation}
    B_t = \frac{\lambda (K+1) \eta_t M^2}{K^2} (B_1 + N),
\end{equation}

\begin{equation}
\begin{aligned}
    \nabla_{\theta_{t}^k} \widehat{L}_k(\theta_{t}, x_{i}^k) = \nabla_{\theta_{t}^k} L_k(\theta_{t}, x_{i}^k) + \mathcal{N}\left(0, \sigma^2 \right).
\end{aligned}
\end{equation}

\begin{proof}

From the local update step in Algorithm~\ref{alg1}, we can express the parameter update as:
\begin{equation}
    \theta_{t+1}^k = \theta_t^k - \eta_t g^k_t(x_{i}^k),
\end{equation}
where the gradient estimate is given by:
\begin{equation}
    g^k_t(x_{i}^k) = \nabla_{\theta_t^k} L_k(\theta_t, x_{i}^k) + \mathcal{N}(0, \sigma^2).
\end{equation}

The global model update is performed as:
\begin{equation}
    \theta_{t+1} = \sum_{k=1}^{K} \frac{n_k}{n_{\mathcal{S}_t}} \theta_{t+1}^k.
\end{equation}

Now, define:
\begin{equation}
    \widehat{g}_t = \sum_{k=1}^{K} \frac{n_k}{n_{\mathcal{S}_t}} g^k_t(x_{i}^k).
\end{equation}

Since we assume that the loss function $L_k(\theta)$ is Lipschitz continuous, we can bound the difference in loss between consecutive iterations:
\begin{equation}
    L(\theta_{t+1}) - L(\theta_t) \leq -\eta_t \langle \nabla L(\theta_t), \widehat{g}_t \rangle + \frac{\eta_t^2 M}{2} \|\widehat{g}_t\|^2.
\end{equation}

By taking the expectation over all participating devices, we obtain:
\begin{equation}
\begin{aligned}
\label{eq:100}
    \mathbb{E} \left[\mathbb{E}_{k \in \mathcal{S}_t} [L(\theta_{t+1}) - L(\theta_t)] \right] 
    &\leq -\eta_t \mathbb{E} \left[ \mathbb{E}_{k \in \mathcal{S}_t} \langle \nabla L(\theta_t), \widehat{g}_t \rangle \right] \\
    &\quad + \frac{\eta_t^2 M}{2} \mathbb{E} \left[ \mathbb{E}_{k \in \mathcal{S}_t} \|\widehat{g}_t\|^2 \right].
\end{aligned}
\end{equation}

Now, let $g_t$ denote the full gradient of the local objective function at iteration $t$, while $\widehat{g}_t$ is an unbiased estimator of $g_t$. Since mini-batches are selected in an i.i.d. manner across devices, we have:

\begin{equation}
\begin{aligned}
    &\mathbb{E} \left[ \|\widehat{g}_t - g_t\|^2 \right] 
    = \mathbb{E} \left[ \left\| \frac{1}{g_t^k} \mathbb{E}_{k \in \mathcal{S}_t} \widehat{g}_t^k - \frac{1}{g_t^k} \mathbb{E}_{k \in \mathcal{S}_t} g_t^k \right\|^2 \right] \\
    &= \frac{1}{K^2} \mathbb{E} \left[ \mathbb{E}_{k \in \mathcal{S}_t} \|\widehat{g}_t^k - g_t^k\|^2 + \sum_{i \neq k} \langle \widehat{g}_t^i - g_t^i, \widehat{g}_t^k - g_t^k \rangle \right] \\
    &= \frac{1}{K^2} \mathbb{E}_{k \in \mathcal{S}_t} \mathbb{E} \left[ \|\widehat{g}_t^k - g_t^k\|^2 \right] \\&\quad +\frac{1}{K^2} \sum_{i \neq k} \mathbb{E} \left[ \langle \widehat{g}_t^k - g_t^k, \widehat{g}_t^i - g_t^i \rangle \right] \\
    &\leq \frac{1}{K^2} \mathbb{E}_{k \in \mathcal{S}_t} \mathbb{E} \left[ \|\widehat{g}_t^k - g_t^k\|^2 \right] 
    \\& \quad + \frac{1}{K^2} \sum_{i \neq k} \langle \mathbb{E} \left[ \widehat{g}_t^k - g_t^k \right], \mathbb{E} \left[ \widehat{g}_t^i - g_t^i \right] \rangle \\
    &\leq \frac{1}{K^2} \mathbb{E}_{k \in \mathcal{S}_t} \left[ B_1 \|g_t^k\|^2 + B_2^2 \right] \\
    &= \frac{B_1}{K^2} \mathbb{E}_{k \in \mathcal{S}_t} \| g_t^k \|^2 + \frac{B_2^2}{g_t^k}.
\end{aligned}
\end{equation}

Next, by taking the expectation over the random selection of participating devices on both sides of the previous equation, we derive:
\begin{equation}
\begin{aligned}
    \mathbb{E}_{k \in \mathcal{S}_t} \left[ \mathbb{E} \left[ \|\widehat{g}_t - g_t\|^2 \right] \right] 
    &\leq \mathbb{E}_{k \in \mathcal{S}_t} \left[ \frac{B_1}{K^2} \mathbb{E}_{k \in \mathcal{S}_t} \| g_t^k \|^2 + \frac{B_2^2}{g_t^k} \right] \\
    &= \frac{B_1}{K^2} \mathbb{E}_{k \in \mathcal{S}_t} \left[ \mathbb{E}_{k \in \mathcal{S}_t} \| g_t^k \|^2 \right] + \frac{B_2^2}{g_t^k} \\
    &= \frac{B_1}{K^2} K \sum_{k=1}^{K} \frac{n_k}{n_{\mathcal{S}_t}} \| g_t^k \|^2 + \frac{B_2^2}{g_t^k}.
\end{aligned}
\end{equation}

Since we know that the expected value of $\widehat{g}_t^k$ equals $g_t^k$, i.e., $\mathbb{E}[\widehat{g}_t^k] = g_t^k$, we can express:
\begin{equation}
\begin{aligned}
    \mathbb{E}[\|\widehat{g}_t\|^2] 
    &= \mathbb{E}[\|\widehat{g}_t - \mathbb{E}[\widehat{g}_t]\|^2] + \|\mathbb{E}[\widehat{g}_t]\|^2 \\
    &= \mathbb{E}[\|\widehat{g}_t - g_t\|^2] + \|g_t\|^2 \\
    &\leq \frac{B_1}{K^2} \mathbb{E}_{k \in \mathcal{S}_t} \|g_t^k\|^2 + \frac{B_2^2}{K} + \left\|\frac{1}{K} \mathbb{E}_{k \in \mathcal{S}_t} g_t^k\right\|^2 \\
    &\leq \frac{B_1}{K^2} \mathbb{E}_{k \in \mathcal{S}_t} \|g_t^k\|^2 + \frac{B_2^2}{K} + \frac{1}{K} \mathbb{E}_{k \in \mathcal{S}_t} \|g_t^k\|^2 \\
    &= \left(\frac{B_1 + K}{K^2}\right) \mathbb{E}_{k \in \mathcal{S}_t} \|g_t^k\|^2 + \frac{B_2^2}{K}.
\end{aligned}
\end{equation}

Here, we use the fact that:
\begin{equation}
    \left\|\sum_{i=1}^{m} a_i\right\|^2 \leq m \sum_{i=1}^{m} \|a_i\|^2.
\end{equation}

Now, applying the assumption that the gradient dissimilarity is bounded, we can upper-bound the second term on the right-hand side of \eqref{eq:100} as:

\begin{equation}
    \begin{aligned}
    \label{eq:101}
    &\mathbb{E}[\mathbb{E}_{k \in \mathcal{S}_t}[\|\widehat{g}_t\|^2]] 
    \\&\leq \left(\frac{B_1+K}{K^2}\right) \left[\sum_{k=1}^{K} \frac{n_k}{n_{\mathcal{S}_t}} \|\nabla_{\theta_{t}^k} \widehat{L}_k(\theta_{t}, x_{i}^k)\|^2\right] + \frac{B_2^2}{K} \\
    &\leq \lambda \left(\frac{B_1+K}{K^2}\right) \left\|\sum_{k=1}^{K} \frac{n_k}{n_{\mathcal{S}_t}} \nabla_{\theta_{t}^k} \widehat{L}_k(\theta_{t}, x_{i}^k)\right\|^2 + \frac{B_2^2}{K}.
    \end{aligned}
\end{equation}

where, 

\begin{equation*}
\begin{aligned}
    \nabla_{\theta_{t}^k} \widehat{L}_k(\theta_{t}, x_{i}^k) = \nabla_{\theta_{t}^k} L_k(\theta_{t}, x_{i}^k) + \mathcal{N}\left(0, \sigma^2 \right),
\end{aligned}
\end{equation*}

and $\lambda$ represents the upper bound on weighted gradient diversity, given by:

\begin{equation}
    \frac{\sum_{k=1}^{K} \frac{n_k}{n_{\mathcal{S}_t}} \|\nabla_{\theta_{t}^k} \widehat{L}_k(\theta_{t}, x_{i}^k)\|_2^2}{\left\|\sum_{k=1}^{K} \frac{n_k}{n_{\mathcal{S}_t}} \nabla_{\theta_{t}^k} \widehat{L}_k(\theta_{t}, x_{i}^k)\right\|_2^2} \leq \lambda.
\end{equation}

Now, we proceed to bound the first term in \eqref{eq:100}.

We define the **average local stochastic gradient** at iteration $t$ as:
\begin{equation}
    \widehat{g}^{(t)} = \frac{1}{K} \sum_{k=1}^{K} \widehat{g}^{(t)}_k.
\end{equation}

Thus, we express:
\begin{equation}
    \begin{aligned}
        -\mathbb{E}_{i \in L_n} \mathbb{E}_{k \in \mathcal{S}_t} \left[ \left\langle \nabla L(\theta_t), \widehat{g}^{(t)} \right\rangle \right] 
        \\= -\mathbb{E}_{i \in L_n} \mathbb{E}_{k \in \mathcal{S}_t} \left[ \left\langle \nabla L(\theta_t), \frac{1}{K} \sum_{k=1}^{K} \widehat{g}^{(t)}_j \right\rangle \right].
    \end{aligned}
\end{equation}

Since the selection of devices occurs prior to computing the stochastic mini-batch gradients and each round of communication involves independently selected devices, we can rewrite the expectation order as:

\begin{equation}
    \begin{aligned}
        -\mathbb{E}_{i \in L_n} \mathbb{E}_{k \in \mathcal{S}_t} \left[ \left\langle \nabla L(\theta_t), \frac{1}{K} \sum_{k=1}^{K} \widehat{g}^{(t)}_j \right\rangle \right] 
        \\= -\mathbb{E}_{k \in \mathcal{S}_t} \mathbb{E}_{i \in L_n} \left[ \left\langle \nabla L(\theta_t), \frac{1}{K} \sum_{k=1}^{K} \widehat{g}^{(t)}_j \right\rangle \right] 
        \\= -\left\langle \nabla L(\theta_t), \mathbb{E}_{k \in \mathcal{S}_t} \left[ \frac{1}{K} \sum_{k=1}^{K} \mathbb{E}_t \left[\widehat{g}_j \right] \right] \right\rangle
        \\= -\left\langle \nabla L(\theta_t), \mathbb{E}_{k \in \mathcal{S}_t} \left[ \frac{1}{K} \sum_{k=1}^{K} \nabla_{\theta_{t}^k} \widehat{L}_k(\theta_{t}, x_{i}^k) \right] \right\rangle
        \\= -\left\langle \nabla L(\theta_t), \frac{1}{K} \mathbb{E}_{k \in \mathcal{S}_t} \left[ \sum_{k=1}^{K} \nabla_{\theta_{t}^k} \widehat{L}_k(\theta_{t}, x_{i}^k) \right] \right\rangle
        \\= -\left\langle \nabla L(\theta_t), \frac{1}{K} \left[ K \sum_{k=1}^{K} \frac{n_k}{n_{\mathcal{S}_t}} \nabla_{\theta_{t}^k} \widehat{L}_k(\theta_{t}, x_{i}^k) \right] \right\rangle
        \\= -\left\langle \nabla L(\theta_t), \sum_{k=1}^{K} \frac{n_k}{n_{\mathcal{S}_t}} \nabla_{\theta_{t}^k} \widehat{L}_k(\theta_{t}, x_{i}^k) \right\rangle.
    \end{aligned}
\end{equation}

Applying the identity:
\begin{equation}
    2\langle a, b \rangle = \|a\|^2 + \|b\|^2 - \|a - b\|^2,
\end{equation}
we derive:
\begin{equation}
    \begin{aligned}
        -\left\langle \nabla L(\theta_t), \sum_{k=1}^{K} \frac{n_k}{n_{\mathcal{S}_t}} \nabla_{\theta_{t}^k} \widehat{L}_k(\theta_{t}, x_{i}^k) \right\rangle
        \\= \frac{1}{2} \biggl[ -\|\nabla L(\theta_t)\|^2 - \left\|\sum_{k=1}^{K} \frac{n_k}{n_{\mathcal{S}_t}} \nabla_{\theta_{t}^k} \widehat{L}_k(\theta_{t}, x_{i}^k) \right\|^2 
        \\+ \left\|\nabla L(\theta_t) - \sum_{k=1}^{K} \frac{n_k}{n_{\mathcal{S}_t}} \nabla_{\theta_{t}^k} \widehat{L}_k(\theta_{t}, x_{i}^k) \right\|^2 \biggr]
        \\= \frac{1}{2} \biggl[ -\|\nabla L(\theta_t)\|^2 - \left\|\sum_{k=1}^{K} \frac{n_k}{n_{\mathcal{S}_t}} \nabla_{\theta_{t}^k} \widehat{L}_k(\theta_{t}, x_{i}^k) \right\|^2 
        \\+ \sum_{k=1}^{K} \frac{n_k}{n_{\mathcal{S}_t}} \|\nabla_{\theta_t} \widehat{L}_k(\theta_t, x_{i}^k) - \nabla_{\theta_{t}^k} \widehat{L}_k(\theta_t, x_{i}^k)\|^2 \biggr].
    \end{aligned}
\end{equation}

Since $L_k(\theta)$ is Lipschitz continuous, we obtain:
\begin{equation}
    \begin{aligned}
        \frac{1}{2} \biggl[ -\|\nabla L(\theta_t)\|^2 - \left\|\sum_{k=1}^{K} \frac{n_k}{n_{\mathcal{S}_t}} \nabla_{\theta_{t}^k} \widehat{L}_k(\theta_{t}, x_{i}^k) \right\|^2 
        \\+ \sum_{k=1}^{K} \frac{n_k}{n_{\mathcal{S}_t}} M^2 \|\theta_t - \theta_t^k\|^2 \biggr].
    \end{aligned}
\end{equation}

Thus, the first term in \eqref{eq:100} is bounded as follows:
\begin{equation}
\label{eq:102}
    \begin{aligned}
        -\eta_t\mathbb{E}[\mathbb{E}_{k \in \mathcal{S}_t}[\langle\nabla L(\theta_t),\widehat{g}_t\rangle]] 
        \\ \leq -\frac{\eta_t}{2} \|\nabla L(\theta_t)\|^2 - \frac{\eta_t}{2} \left\|\sum_{k=1}^{K} \frac{n_k}{n_{\mathcal{S}_t}} \nabla_{\theta_{t}^k} \widehat{L}_k(\theta_{t}, x_{i}^k) \right\|^2 
        \\+ \frac{\eta_t M^2}{2} \sum_{k=1}^{K} \frac{n_k}{n_{\mathcal{S}_t}} \|\theta_t - \theta_t^k\|^2.
    \end{aligned}
\end{equation}

Applying the Polyak-Łojasiewicz (PL) property, we obtain:
\begin{equation}
\label{eq:103}
    \begin{aligned}
        -\eta_t \mathbb{E} \left[ \mathbb{E}_{k \in \mathcal{S}_t} \left[ \left\langle \nabla L(\theta_t), \widehat{g}^{(t)} \right\rangle \right] \right] 
        \\ \leq -\mu \eta_t (L(\theta_t) - L(\theta^*)) - \frac{\eta_t}{2} \left\|\sum_{k=1}^{K} \frac{n_k}{n_{\mathcal{S}_t}} \nabla_{\theta_{t}^k} \widehat{L}_k(\theta_{t}, x_{i}^k) \right\|^2 
        \\+ \frac{\eta_t L^2}{2} \sum_{k=1}^{K} \frac{n_k}{n_{\mathcal{S}_t}} \|\theta_t - \theta_t^k\|^2.
    \end{aligned}
\end{equation}

We now proceed to simplify the last term on the right-hand side of \eqref{eq:103}.

Define \( t_c \triangleq \left\lfloor \frac{t}{|N|} \right\rfloor |N| \). Based on Algorithm \ref{alg1}, we have:
\begin{equation}
    \theta_{t_c+1} = \frac{1}{K} \sum_{k=1}^{K} \theta^k_{t_c+1}.
\end{equation}

Using this definition, the update rule of Algorithm \ref{alg1} can be rewritten as:
\begin{equation}
    \begin{aligned}
        \theta_t^k &= \theta^k_{t-1} - \eta_{t-1} \widehat{g}^k_{t-1} \\
        &= \theta^k_{t-2} - \left(\eta_{t-2} \widehat{g}^k_{t-2} + \eta_{t-1} \widehat{g}^k_{t-1} \right) \\
        &= \theta_{t_c+1} - \sum_{j=t_c+1}^{t-1} \eta_j \widehat{g}^k_j.
    \end{aligned}
\end{equation}

This follows directly from the iterative update rule. Given this, we can now compute the global model update:
\begin{equation}
    \theta_t = \theta_{t_c+1} - \frac{1}{K} \sum_{k=1}^{K} \sum_{j=t_c+1}^{t-1} \eta_j \widehat{g}_j^k.
\end{equation}

Next, our objective is to bound the term \( \mathbb{E}\|\theta_t - \theta_t^k\|^2 \) for \( t_c + 1 \leq t \leq t_c + E \), where \( n \) represents the indices of local updates. To achieve this, we relate this quantity to the variance between the stochastic gradient and the full gradient:

\begin{equation*}
    \begin{aligned}
        &\mathbb{E}\left[\|\theta_{(t_c+n)} - \theta^k_{(t_c+n)}\|^2\right] \\&= \mathbb{E} \left[ \left\|\theta_{t_c+1} - \sum_{j=t_c+1}^{t-1} \eta_j \widehat{g}^k_j 
        - \theta_{t_c+1} + \frac{1}{K} \sum_{k=1}^{K} \sum_{j=t_c+1}^{t-1} \eta_j \widehat{g}_j^{k}\right\|^2 \right] \\
        &= \mathbb{E} \left[ \left\|\sum_{j=1}^{n} \eta_{t_c+j} \widehat{g}^k_{t_c+j} - \frac{1}{K} \sum_{k=1}^{K} \sum_{j=1}^{n} \eta_{t_c+j} \widehat{g}_j^{t_c+j}\right\|^2 \right] \\
        &\leq 2 \mathbb{E} \left[ \left\|\sum_{j=1}^{n} \eta_{t_c+j} \widehat{g}^k_{t_c+j}\right\|^2 \right] 
        + 2 \mathbb{E} \left[ \left\|\frac{1}{K} \sum_{k=1}^{K} \sum_{j=1}^{n} \eta_{t_c+j} \widehat{g}_j^{t_c+j}\right\|^2 \right].
    \end{aligned}
\end{equation*}

Now, expanding the expectation terms, we obtain:
\begin{equation}
    \begin{aligned}
        =&2 \Bigg(\mathbb{E} \left[ \left\|\sum_{j=1}^{n} \eta_{t_c+j} \widehat{g}^k_{t_c+j} - \mathbb{E} \left[\sum_{j=1}^{n} \eta_{t_c+j} \widehat{g}^k_{t_c+j} \right]\right\|^2 \right] 
        \\&+ \left\|\mathbb{E} \left[\sum_{j=1}^{n} \eta_{t_c+j} \widehat{g}^k_{t_c+j} \right]\right\|^2 \Bigg) \\
        &+ 2 \mathbb{E} \biggl[\biggl\|\frac{1}{K} \sum_{k=1}^{K} \sum_{j=1}^{n} \eta_{t_c+j} \widehat{g}_j^{t_c+j} 
        \\&- \mathbb{E} \left[\frac{1}{K} \sum_{k=1}^{K} \sum_{j=1}^{n} \eta_{t_c+j} \widehat{g}_j^{t_c+j} \right] \biggr\|^2 \biggr] \\
        &+ \left\|\mathbb{E} \left[\frac{1}{K} \sum_{k=1}^{K} \sum_{j=1}^{n} \eta_{t_c+j} \widehat{g}_j^{t_c+j} \right]\right\|^2.
    \end{aligned}
\end{equation}

Since the gradient estimates follow an unbiased assumption, we can simplify further:
\begin{equation}
    \begin{aligned}
        &= 2 \mathbb{E} \left[ \left\|\sum_{j=1}^{n} \eta_{t_c+j} \left(\widehat{g}^k_{t_c+j} - g^k_{t_c+j} \right) \right\|^2 \right] 
        + \left\|\sum_{j=1}^{n} \eta_{t_c+j} g^k_{t_c+j}\right\|^2 \\
        &\quad + 2 \mathbb{E} \left[ \left\|\frac{1}{K} \sum_{k=1}^{K} \sum_{j=1}^{n} \eta_{t_c+j} \left(\widehat{g}_j^{t_c+j} - g_j^{t_c+j} \right) \right\|^2 \right] 
        \\&\quad + \left\|\frac{1}{K} \sum_{k=1}^{K} \sum_{j=1}^{n} \eta_{t_c+j} g_j^{t_c+j} \right\|^2.
    \end{aligned}
\end{equation}

The above inequalities hold due to the convexity properties and unbiased nature of the stochastic gradients.

Using the **smoothness assumption** along with the i.i.d. sampling property, we can derive the following bound:

\begin{equation}
\begin{aligned}
    \mathbb{E}\left[\|\theta_{(t_c+n)} - \theta^k_{(t_c+n)}\|^2\right] 
    \leq 2\mathbb{E} \Bigg[ \sum_{j=1}^{n} \eta^2_{t_c+j} \|g_{t_c+j}^k - g_{t_c+j}\|^2 \\ 
    + \sum_{j \neq u \lor k \neq v} \left\langle \eta_j g_{t_c+j}^k - \eta_j g_{t_c+j}, \eta_u g_{t_c+u}^v - \eta_u g_{t_c+u} \right\rangle \\
    + \biggl\| \sum_{j=1}^{n} \eta_{t_c+j} g_{t_c+j} \biggr\|^2 
    + \frac{1}{K^2} \sum_{k \in \mathcal{S}_t} \sum_{j=1}^{n} \eta^2_{t_c+j} \|g_{t_c+j}^j - g_{t_c+j}\|^2 \\
    + \frac{1}{K^2} \sum_{j \neq u \lor k \neq v} \left\langle \eta_j g_{t_c+j}^j - \eta_j g_{t_c+j}, \eta_u g_{t_c+u}^v - \eta_u g_{t_c+u} \right\rangle \\
    + \left\| \frac{1}{K} \sum_{k \in \mathcal{S}_t} \sum_{j=1}^{n} \eta_{t_c+j} g_{t_c+j}^j \right\|^2 \Bigg].
\end{aligned}
\end{equation}

Applying the convexity properties, we further simplify:

\begin{equation}
\begin{aligned}
    &\mathbb{E}\left[\|\theta_t - \theta^k_t\|^2\right] 
    \leq \\&2\mathbb{E} \Bigg[ \sum_{j=1}^{n} \eta^2_{t_c+j} \|g_{t_c+j}^k - g_{t_c+j}\|^2 
    + n \sum_{j=1}^{n} \eta^2_{t_c+j} \|g_{t_c+j}\|^2 \\&
    + \frac{1}{K^2} \sum_{k \in \mathcal{S}_t} \sum_{j=1}^{n} \eta^2_{t_c+j} \|g_{t_c+j}^j - g_{t_c+j}\|^2 
    \\&+ \frac{n}{K^2} \sum_{k \in \mathcal{S}_t} \sum_{j=1}^{n} \eta^2_{t_c+j} \|g_{t_c+j}^j\|^2 \Bigg].
\end{aligned}
\end{equation}

Now, leveraging **Assumption 1**, we obtain:

\begin{equation}
\begin{aligned}
    \mathbb{E}&\left[\|\theta_t - \theta^k_{t}\|^2\right] \leq 2 \Bigg[ \sum_{j=1}^{n} \eta^2_{t_c+j} \left( B_1 \| g^k_{(t_c+j)} \|^2 + \frac{B_2^2}{K} \right) 
    \\&+ n \sum_{j=1}^{n} \eta^2_{t_c+j} \| g^k_{(t_c+j)} \|^2 \Bigg]    \\ &+ \frac{1}{K^2} \sum_{k \in \mathcal{S}_t} \sum_{j=1}^{n} \eta^2_{t_c+j} \biggl( B_1 \| g_j^{(t_c+j)} \|^2 + \frac{B_2^2}{K} \biggr) 
    \\&+ \frac{n}{K^2} \sum_{k \in \mathcal{S}_t} \sum_{j=1}^{n} \eta^2_{t_c+j} \| g_j^{(t_c+j)} \|^2.
\end{aligned}
\end{equation}

Summing over the participating clients, we derive:

\begin{equation}
\begin{aligned}
    &\mathbb{E} \sum_{k \in \mathcal{S}_t} \|\theta_t - \theta^k_t\|^2 
    \leq \\&2 \Bigg[ \sum_{k \in \mathcal{S}_t} \sum_{j=1}^{n} \eta^2_{t_c+j} B_1 \| g^k_{(t_c+j)} \|^2 
    + \sum_{j=1}^{n} \eta^2_{t_c+j} \frac{B_2^2}{K} \\&
    + n \sum_{k \in \mathcal{S}_t} \sum_{j=1}^{n} \eta^2_{t_c+j} \| g^k_{(t_c+j)} \|^2 \Bigg] 
    \\&+ \frac{1}{K} \sum_{k \in \mathcal{S}_t} \sum_{j=1}^{n} \eta^2_{t_c+j} B_1 \| g_j^{(t_c+j)} \|^2 
    + \sum_{j=1}^{n} \eta^2_{t_c+j} \frac{B_2^2}{K^2}.
\end{aligned}
\end{equation}

Using the **weighted gradient diversity upper bound**, we obtain:

\begin{equation}
\label{eq:104}
\begin{aligned}
    &\mathbb{E} \sum_{k=1}^{K} \frac{n_k}{n_{\mathcal{S}_t}} \|\theta_t - \theta^k_t\|^2 
    \leq \\&2 \Bigg( \frac{K+1}{K} \Bigg) \Bigg( \left[ B_1+N\right] \sum_{j=t_c+1}^{t-1} \eta_j^2 \sum_{k=1}^{K} \frac{n_k}{n_{\mathcal{S}_t}} \|\nabla_{\theta_{t}^k} \widehat{L}_k(\theta_t, x_{i}^k)\|^2 \\&
    \quad\quad \quad \quad \quad \quad \quad \quad \quad \quad \quad \quad \quad \quad \quad \quad \quad  + \sum_{j=t_c+1}^{t-1} \frac{\eta_j^2 B_2^2}{K} \Bigg).
\end{aligned}
\end{equation}

Finally, substituting the results from \eqref{eq:101}, \eqref{eq:102}, and \eqref{eq:104} into \eqref{eq:100}, we obtain:

\begin{equation}
\label{eq:105}
\begin{aligned}
    &\mathbb{E}[L(\theta_{t+1})] - L(\theta^*) 
    \leq (1-\mu\eta_t) \mathbb{E}[L(\theta_t)-L(\theta^*)] + \frac{M\eta_t^2 B_2^2}{2K} \\&
    + \frac{\eta_t M^2}{K} \sum_{j=t_c+1}^{t-1} \eta_j^2 \frac{(K+1)B_2^2}{K} 
    \\&+ \frac{\eta_t}{2} \left[-1+\frac{M\lambda\eta_t(B_1+K)}{K} \right] 
    \left\| \sum_{k=1}^{K} \frac{n_k}{n_{\mathcal{S}_t}} \nabla_{\theta_{t}^k} \widehat{L}_k(\theta_t, x_{i}^k) \right\|^2 \\&
    + \frac{\eta_t M^2 (K+1)}{K^2} \Bigg[ \lambda(B_1+N) \sum_{j=t_c+1}^{t-1} \eta_j^2 
    \\&\quad\quad \quad \quad \quad \quad \quad\quad \quad \quad \quad \quad\left\| \sum_{k=1}^{K} \frac{n_k}{n_{\mathcal{S}_t}} \nabla_{\theta_{t}^k} \widehat{L}_k(\theta_t, x_{i}^k) \right\|^2 \Bigg].
\end{aligned}
\end{equation}

To streamline our convergence analysis, we define the following parameters:

\begin{equation}
    \triangle_t = 1 - \mu \eta_t,
\end{equation}
\begin{equation}
    c_t = \frac{\eta_t M B_2^2}{K} \left[\frac{\eta_t}{2} + \frac{M(K+1)}{K} \sum_{j=t_c+1}^{t-1} \eta_j^2 \right],
\end{equation}
\begin{equation}
    B_t = \frac{\lambda (K+1) \eta_t M^2}{K^2} (B_1 + N).
\end{equation}

With these definitions in place, we can now rewrite and simplify the bound obtained in \eqref{eq:105} as:

\begin{equation}
\label{eq:106}
    \begin{aligned}
        &\mathbb{E}[L(\theta_{t+1})] - L(\theta^*) 
        \leq \triangle_t \mathbb{E}[L(\theta_t) - L(\theta^*)] + c_t \\
        & + \frac{\eta_t}{2} \left[ -1 + \lambda M \eta_t \left(\frac{B_1 + K}{K}\right) \right] 
        \left\| \sum_{k=1}^{K} \frac{n_k}{n_{\mathcal{S}_t}} \nabla_{\theta_t^k} \widehat{L}_k(\theta_t, x_i^k) \right\|^2 \\
        &+ B_t \sum_{j=t_c+1}^{t-1} \eta_j^2 
        \left\| \sum_{k=1}^{K} \frac{n_k}{n_{\mathcal{S}_t}} \nabla_{\theta_t^k} \widehat{L}_k(\theta_t, x_i^k) \right\|^2.
    \end{aligned}
\end{equation}

This final bound characterizes the expected difference between the global loss function and the optimal loss at iteration $t+1$, highlighting the influence of the learning rate $\eta_t$, the gradient dissimilarity parameter $B_1$, and the noise variance $B_2^2$.

\end{proof}
\end{document}